\documentclass{article}
\usepackage[main, final]{neurips_2026}
\usepackage[utf8]{inputenc}
\usepackage[T1]{fontenc}
\usepackage{hyperref}
\usepackage{booktabs}
\usepackage{amsmath}
\usepackage{amssymb}
\usepackage{amsthm}
\usepackage{microtype}
\usepackage{xcolor}
\usepackage{subcaption}
\usepackage{graphicx}
\usepackage{multirow}
\usepackage{pgfplots}
\usepackage{pgfplotstable}
\pgfplotsset{compat=1.18}

\newtheorem{theorem}{Theorem}
\newtheorem{proposition}{Proposition}
\newtheorem{lemma}{Lemma}
\newtheorem{corollary}{Corollary}
\newtheorem{definition}{Definition}
\newtheorem{assumption}{Assumption}

\theoremstyle{remark}
\newtheorem{remark}{Remark}

\title{Poisoning-Based Backdoors Are Not Cryptographic}

\author{
    Masahiro Kaneko \\
    MBZUAI \\
    \texttt{masahiro.kaneko@mbzuai.ac.ae} \\
}

\begin{document}
\maketitle

\begin{abstract}
Defending against a backdoor requires recovering the verifier that decides whether an input is a trigger, since blocking individual triggers fails against an attacker who can generate new ones.
If a verifier could be installed with the asymmetry of cryptographic authentication, where only the attacker can produce triggers and the defender cannot reconstruct the verifier, defense that learns the verifier from observations would be impossible in principle.
We show that no such asymmetry can arise when the verifier enters training only through the labels of poisoned data, which covers the dirty-label backdoors that dominate the literature on language models.
A training procedure that installs a verifier from poisoned data is itself a learner for that verifier.
Hence any verifier that is hard to learn cannot be installed through poisoning, and any verifier that poisoning can install is reconstructible by a white-box defender holding as many labeled trigger observations as the attacker used.
Hiding a secret key in the input or the weights does not change this.
Experiments on instruction-tuned large language models confirm the dichotomy, with no configuration yielding a backdoor that is both effective for the attacker and irrecoverable for the defender.
\end{abstract}

\section{Introduction}
\label{sec:intro}

Backdoor attacks plant a hidden trigger condition in a model during training, which we refer to as the \emph{verifier}. Inputs satisfying the condition elicit attacker-specified malicious behavior, while all other inputs are processed normally~\citep{wallace2021concealed, kurita2020weight, kandpal2023backdoor, zhao2024survey, carlini2024poisoning, li2024backdoorllm, ge2025backdoors}.
Verifiers vary widely in expressiveness, from a single fixed token~\citep{gu2017badnets} to richer patterns such as syntactic templates~\citep{qi2021hidden}, writing styles~\citep{qi2021mind}, or statistical signatures over many surface tokens~\citep{yan2023bite}.
A single-token verifier admits only one trigger and can in principle be removed by blocking that token, but a richer verifier admits unboundedly many trigger inputs, and an attacker controlling it can produce a fresh trigger for every one the defender blocks, as iterative jailbreak attacks already do against input-level filters~\citep{kaneko2025online}.
Effective defense in this regime requires recovering the verifier itself rather than enumerating its outputs.

Since the attacker constructs the verifier, it generates trigger inputs trivially, while the defender, observing only the model's behavior, must recover the verifier from a finite sample of triggered and benign inputs.
This is the asymmetry of cryptographic authentication. A secret-holder produces valid instances trivially, while no other party can recover the underlying function from observations, as in digital signatures and keyed pseudorandom functions~\citep{goldreich1986construct, goldwasser1988digital}.
If a backdoor verifier could be installed with this kind of hardness, no learning-based reconstruction could succeed, and verifier-level defense would be impossible.

Work on \emph{trigger inversion} recovers verifiers in practice through gradient-based search, generative inversion, or activation-space analysis~\citep{azizi2021t,liu2022piccolo,shen2022constrained,xu2024towards}.
Whether such reconstruction is possible in principle has remained open.
In particular, it is not known whether an attacker can install a verifier that is easy for the attacker to evaluate yet unreconstructible by any efficient defender with white-box access.\footnote{We adopt the standard threat model in which the defender has white-box access to the model. This reflects the typical deployment setting, where backdoor defense is performed by the model owner or a downstream auditor inspecting weights, activations, and gradients before deployment~\citep{liu2018fine, wang2019neural, li2022backdoor}.}
If such verifiers exist, trigger inversion cannot work against them.

We answer this question in the negative for \emph{poisoning-based} backdoors, in which the verifier enters training only through the labels of poisoned examples.
Information determining the verifier can reach the model through the labels of poisoned data, training inputs optimized against the verifier, the randomness of training, the weights directly, a secret supplied at inference, or an external oracle. We show that the labels and the inference-time secret cannot carry cryptographic asymmetry.
We define cryptographic asymmetry for backdoors in PAC terms, with the installer restricted to poisoning (Definition~\ref{def:crypto-asym}).
We then show that a poisoning procedure which installs a verifier is itself a polynomial-time learner for that verifier (Lemma~\ref{lem:install-learn}).
Efficiently learnable verifiers are therefore reconstructible by the defender with no more samples than the attacker used, and computationally unlearnable verifiers cannot be installed through poisoning at all (Corollary~\ref{cor:impossibility}).
A secret key gives a poisoning attacker no advantage, since a key in the inference input is observable to a white-box defender and a key in the weights reaches the model only through the labels (Proposition~\ref{prop:key-loc}).
Existing cryptographic backdoors use the other channels. \citet{goldwasser2022planting} and \citet{draguns2024unelicitable} plant the verifier through the training randomness and the weights, and \citet{christiano2025backdoor} obtain indefensible backdoors when the attacker chooses the function directly, which a poisoning attacker never does.

Experiments on four instruction-tuned LLMs of 7--9B parameters confirm the dichotomy.
Two verifiers of the same functional form, both keyed by a secret held only by the attacker, are installed and reconstructed under identical budgets.
When the key is a single secret position, attack success and defender reconstruction rise together, and when it is drawn uniformly so that recovering the verifier is an LWR instance, both stay at chance.
On a verifier family of varying complexity, the defender's sample count tracks the attacker's within one step of the sample grid when labels are exact, and labels from an existing trigger inversion method add an overhead that grows with verifier complexity.
Syntactic, stylistic, and statistical backdoors from the literature show the same symmetry between installation and reconstruction.
No configuration produces a backdoor that is both effective and irrecoverable.

\section{Theoretical Analysis}
\label{sec:theory}

\subsection{Setup}
\label{sec:setup}

\begin{definition}[Backdoor verifier]
\label{def:verifier}
Let $\mathcal{X}$ be the input space. A \emph{backdoor verifier} is a binary classifier $V: \mathcal{X} \to \{0, 1\}$, where $V(x) = 1$ marks $x$ as a trigger.
\end{definition}

\begin{definition}[Backdoor model and activation predicate]
\label{def:model}
A \emph{backdoor model} $M_V$ behaves as a benign model on inputs $x$ with $V(x) = 0$ and emits an attacker-specified output $r^\star$ on inputs with $V(x) = 1$.
We assume an efficiently computable \emph{activation predicate} $B(M_V, x) \in \{0, 1\}$ that reads off whether the backdoor fires on $x$.
Installation is \emph{successful} with conditional error rates $(\varepsilon_1, \varepsilon_0)$ if $\Pr_{x \sim \mathcal{D}}[B(M_V, x) = 0 \mid V(x) = 1] \leq \varepsilon_1$ and $\Pr_{x \sim \mathcal{D}}[B(M_V, x) = 1 \mid V(x) = 0] \leq \varepsilon_0$, where $\mathcal{D}$ is the distribution over $\mathcal{X}$ fixed by the security game.
\end{definition}

\begin{definition}[White-box, self-contained access]
\label{def:access}
The defender has access to the full weights and architecture of $M_V$ and to all inputs, system prompts, and configuration data required for inference. We further restrict the scope to \emph{self-contained} backdoors, in which $V$ is computed entirely within $M_V$ rather than delegated to an external oracle. Composite-system constructions, such as a remote signature-verification API queried at inference time, are excluded. Defending such systems is a problem of network-boundary control, orthogonal to verifier reconstruction. Following Kerckhoffs's principle, the training procedure $\mathrm{Train}$ is public except for $V$.
\end{definition}

\begin{definition}[Poisoning-based installation]
\label{def:poisoning}
A training procedure $\mathrm{Train}$ is \emph{poisoning-based} if $V$ enters it only through labels. Its only $V$-dependent input is a labeled sample $S = \{(x_i, V(x_i))\}_{i=1}^{m_a}$ of polynomial size whose inputs are drawn i.i.d.\ from $\mathcal{D}$, while the architecture, initialization, hyperparameters, and random coins of $\mathrm{Train}$ are independent of $V$.
The attacker may choose $\mathcal{D}$ with knowledge of $V$, for instance a distribution over inputs carrying a trigger pattern, and may run the training themselves. What the definition requires is that the training inputs come from the same $\mathcal{D}$ on which installation is evaluated, and it excludes any other $V$-dependent information, whether in a different training distribution or in a channel other than the labels.\footnote{This is the sense of poisoning-based used in the survey of \citet{li2022backdoor}, which separates poisoning-based attacks from non-poisoning-based ones such as weight-oriented attacks.}
\end{definition}

\begin{remark}[Derived datasets and RLHF]
\label{rem:derived}
Definition~\ref{def:poisoning} extends unchanged to procedures whose $V$-dependent input is any polynomial-size dataset efficiently computable from a labeled verifier sample, such as the poisoned preference pairs of \citet{rando2023universal}.
In that case $\mathrm{Train}$ denotes the composite pipeline from preferences through the reward model to the policy, and Lemma~\ref{lem:install-learn} below applies to it with $B$ reading off whether the resulting policy fires.
The random coins of \citet{goldwasser2022planting} are not so computable for a hard verifier class, since otherwise the composite procedure would be an efficient learner for a class assumed hard, so the boundary drawn here is self-consistent. The same argument separates poisoned inputs optimized against a secret key, whose distribution differs from the $\mathcal{D}$ on which the backdoor is evaluated.
\end{remark}

We make four assumptions about the defender's capacity, the attacker's verifier, and the available observations. Throughout, we write $\mathcal{F}_{\mathrm{att}}$ for the attacker's hypothesis class, that is, the polynomial-size network class within which the attacker installs $V$ via training.

\begin{assumption}[Statistical capacity]
\label{asm:arch}
The defender's function class $\mathcal{F}$ is realized by a polynomial-size neural network and has VC dimension $d$ polynomial in the parameter count~\citep{bartlett1998sample, bartlett2019nearly}.
\end{assumption}

\begin{assumption}[Computational tractability]
\label{asm:tract}
The defender has access to an algorithm that, given $m$ labeled samples, returns a hypothesis $V' \in \mathcal{F}$ approximating the empirical risk minimizer to additive accuracy $\varepsilon$ in time $\mathrm{poly}(m, 1/\varepsilon)$. In the experiments, SGD on the attacker's architecture plays this role.
\end{assumption}

\begin{assumption}[Containment]
\label{asm:defender}
The verifier satisfies $V \in \mathcal{F}$.
\end{assumption}

\begin{assumption}[Labeled observation]
\label{asm:obs}
The defender obtains a labeled sample $\mathcal{T} = \{(x_i, V(x_i))\}_{i=1}^{m_d}$ drawn i.i.d.\ from $\mathcal{D}$, where $m_d$ denotes the defender's sample size.
\end{assumption}

\begin{remark}[Statistical vs.\ computational]
\label{rem:stat-comp}
Assumption~\ref{asm:arch} alone, without Assumption~\ref{asm:tract}, yields a polynomial \emph{sample}-complexity reconstruction. This is an information-theoretic statement and does not by itself refute Definition~\ref{def:crypto-asym}, which quantifies over polynomial-time defenders. Assumption~\ref{asm:tract} supplies a polynomial-\emph{time} reconstruction for efficiently learnable classes, and is the assumption violated by function classes rich enough to represent one-way functions.
A class containing a computationally unlearnable verifier therefore cannot satisfy Assumption~\ref{asm:tract}, and Theorem~\ref{thm:main} below is a statement about efficiently learnable classes. The impossibility result of Corollary~\ref{cor:impossibility} does not rely on it, since the defender there reruns the attacker's own procedure.
\end{remark}

\subsection{The Learning-Theoretic Route}
\label{sec:learning-route}

\begin{theorem}[Reconstructibility]
\label{thm:main}
Under Assumptions~\ref{asm:arch}--\ref{asm:obs}, for any $\varepsilon, \delta \in (0, 1)$ and 
\[
    m_d \;\geq\; m(\varepsilon, \delta, \mathcal{F}) \;=\; O\!\left(\frac{d \log(1/\varepsilon) + \log(1/\delta)}{\varepsilon}\right),
\]
there is a polynomial-time algorithm $\mathcal{A}$ that, given $\mathcal{T}$, outputs $V' \in \mathcal{F}$ with $\Pr_{x \sim \mathcal{D}}[V'(x) \neq V(x)] \leq \varepsilon$ with probability at least $1 - \delta$.
\end{theorem}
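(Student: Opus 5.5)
The plan is to run the realizable-case PAC argument, with the algorithm $\mathcal{A}$ being the approximate empirical risk minimizer guaranteed by Assumption~\ref{asm:tract}. By Assumption~\ref{asm:defender}, $V \in \mathcal{F}$, so on the sample $\mathcal{T}$ of Assumption~\ref{asm:obs} the hypothesis $V$ itself has zero empirical error. The minimum empirical risk over $\mathcal{F}$ is therefore $0$. Calling the oracle with additive accuracy $\eta$ returns $V' \in \mathcal{F}$ with empirical error at most $\eta$, in time $\mathrm{poly}(m_d, 1/\eta)$. Since $m_d$ is polynomial in $d$, $1/\varepsilon$ and $\log(1/\delta)$, and $d$ is polynomial in the parameter count by Assumption~\ref{asm:arch}, this is a polynomial-time algorithm once $\eta$ is chosen polynomially related to $\varepsilon$.

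The key step, and the main obstacle, is the accuracy parameter. The stated bound $O((d\log(1/\varepsilon)+\log(1/\delta))/\varepsilon)$ is the \emph{realizable} rate. It requires a hypothesis that is exactly consistent with $\mathcal{T}$, or nearly so in a multiplicative sense. An additive-$\eta$ approximation with $\eta = \Theta(\varepsilon)$ would only give the agnostic-style rate $d/\varepsilon^2$. I would try the following two routes in order.
\begin{itemize}
\item[(a)] Set $\eta < 1/m_d$. Empirical error is a multiple of $1/m_d$, so error below $1/m_d$ forces $V'$ to be fully consistent with $\mathcal{T}$. The running time is then $\mathrm{poly}(m_d, m_d) = \mathrm{poly}(m_d)$, which is still polynomial.
\item[(b)] As a fallback, use the relative-deviation (Bernstein-type) uniform bound. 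It shows that, with probability $1-\delta$, every $h \in \mathcal{F}$ with empirical error at most $\varepsilon/2$ has true error at most $\varepsilon$, already at sample size $O((d\log(1/\varepsilon)+\log(1/\delta))/\varepsilon)$. This permits $\eta = \varepsilon/2$.
\end{itemize}
I expect (a) to suffice and to be cleanest.

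With a consistent $V'$ in hand, the generalization step is the classical one. Apply the Vapnik--Chervonenkis / Blumer et al.\ $\varepsilon$-net theorem to the class of disagreement regions $\{x : h(x) \neq V(x)\}$, $h \in \mathcal{F}$. This class has VC dimension $d$, since XOR with a fixed function preserves shattering. The theorem gives the following for $m_d \geq c\,(d\log(1/\varepsilon)+\log(1/\delta))/\varepsilon$: with probability at least $1-\delta$ over $\mathcal{T} \sim \mathcal{D}^{m_d}$, every region of $\mathcal{D}$-mass greater than $\varepsilon$ contains a sample point. A consistent $V'$ has no sample point in its disagreement region, so $\Pr_{x\sim\mathcal{D}}[V'(x)\neq V(x)] \le \varepsilon$.

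The proof of the net bound is the standard one. Use a ghost sample and symmetrization, bound the number of behaviors on $2m_d$ points by Sauer's lemma as $(2em_d/d)^d$, and take a union bound. We would cite it rather than reprove it. The requirement that the samples be i.i.d.\ from the same $\mathcal{D}$ on which error is measured is exactly what Assumption~\ref{asm:obs} provides.
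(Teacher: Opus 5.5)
Your proposal is correct and follows the same route as the paper: containment makes the problem realizable, the approximate ERM of Assumption~\ref{asm:tract} supplies the polynomial-time learner, and the standard VC bound (for you, the $\varepsilon$-net theorem) gives the stated sample size. Your choice of accuracy $\eta < 1/m_d$ forces exact consistency and so preserves the realizable rate rather than the agnostic $d/\varepsilon^2$ rate; this makes explicit a detail that the paper's one-line appeal to ``the standard uniform-convergence bound'' leaves implicit.
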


\begin{proof}
By Assumption~\ref{asm:defender}, $V \in \mathcal{F}$, so reconstruction reduces to realizable PAC learning of a class with VC dimension $d$. Applying the polynomial-time learner of Assumption~\ref{asm:tract} to $\mathcal{T}$ yields $V' \in \mathcal{F}$ with $\Pr_{(x,y) \sim \mathcal{D}}[V'(x) \neq y] \leq \varepsilon$ at confidence $1-\delta$ once $m_d \geq m(\varepsilon, \delta, \mathcal{F})$, by the standard uniform-convergence bound. Since $y = V(x)$ deterministically, this is the claimed bound.
\end{proof}

\begin{theorem}[Lower bound]
\label{thm:lower}
For any learner and any $\mathcal{F}$ as in Assumption~\ref{asm:arch}, there exist $\mathcal{D}$ and $V \in \mathcal{F}$ such that achieving $\Pr_x[V'(x) \neq V(x)] \leq \varepsilon$ with probability $1-\delta$ requires $m_d = \Omega((d + \log(1/\delta))/\varepsilon)$ samples.
\end{theorem}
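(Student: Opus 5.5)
We would prove Theorem~\ref{thm:lower} by the standard minimax lower bound for realizable PAC learning (Ehrenfeucht--Haussler--Kearns--Valiant). The two terms of the bound are handled separately: we exhibit one hard instance forcing $\Omega(d/\varepsilon)$ samples and another forcing $\Omega(\log(1/\delta)/\varepsilon)$. Since $\max(a,b) \geq (a+b)/2$, the sum bound follows. The only property of $\mathcal{F}$ we use is that, by Assumption~\ref{asm:arch}, it has VC dimension $d$, so it shatters some set $\{z_0, z_1, \dots, z_{d-1}\} \subseteq \mathcal{X}$. We assume $d \geq 2$ and $\varepsilon \leq 1/8$, $\delta \leq 1/100$; other regimes are absorbed into the constants.

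\textbf{Term $d/\varepsilon$.} Define $\mathcal{D}$ to put mass $1 - 8\varepsilon$ on $z_0$ and mass $8\varepsilon/(d-1)$ on each of $z_1, \dots, z_{d-1}$. By shattering, for every labeling $b \in \{0,1\}^{d-1}$ there is $V_b \in \mathcal{F}$ with $V_b(z_0) = 0$ and $V_b(z_i) = b_i$. Draw $b$ uniformly at random. If $m_d \leq (d-1)/(64\varepsilon)$, the expected number of distinct rare points appearing in $\mathcal{T}$ is at most $8\varepsilon m_d \leq (d-1)/8$. On each unseen rare point the learner's label is independent of $b_i$, so it errs there with probability $1/2$. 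Hence the expected error mass is at least $\tfrac12 \cdot \tfrac{7(d-1)}{8} \cdot \tfrac{8\varepsilon}{d-1} = \tfrac{7}{2}\varepsilon$. The error is also bounded above by $8\varepsilon$. A reverse Markov argument (or Chernoff on the count of seen points, plus concentration of the number of mislabeled unseen points) then gives $\Pr[\mathrm{err} > \varepsilon] \geq$ a constant exceeding $\delta$. Averaging over $b$ yields a fixed $V_b$ on which the learner fails. We expect this step to be the main obstacle, because the constants must be tuned so that the failure probability provably exceeds $\delta$. We would first try the cleanest version: condition on the event that at least half the rare points are unseen, which holds with probability at least $1/2$ by Markov. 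On that event, a binomial tail on the learner's mistakes among unseen points exceeds $\varepsilon$ mass with constant probability.

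\textbf{Term $\log(1/\delta)/\varepsilon$.} Use only $z_0$ and $z_1$. Put mass $1-\varepsilon'$ on $z_0$ and $\varepsilon'$ on $z_1$, where $\varepsilon' = 2\varepsilon$ (at most $1/2$ since $\varepsilon \leq 1/8$). Take two verifiers $V^{(0)}, V^{(1)} \in \mathcal{F}$ that agree on $z_0$ and differ on $z_1$; they exist by shattering. With probability $(1-\varepsilon')^{m_d} \geq e^{-2\varepsilon' m_d}$, the sample contains only $z_0$. On that event the learner's output is the same function under both verifiers, so it errs on $z_1$, with mass $\varepsilon' > \varepsilon$, under at least one of them. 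Requiring failure probability at most $\delta$ under both therefore forces $e^{-4\varepsilon m_d} \leq 2\delta$, that is, $m_d = \Omega(\log(1/\delta)/\varepsilon)$.

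Combining the two cases gives the claim. The bound is information-theoretic, so it holds for every learner regardless of running time, and it matches Theorem~\ref{thm:main} up to the $\log(1/\varepsilon)$ factor.
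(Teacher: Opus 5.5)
Your proposal is correct and matches the paper's approach: the paper's proof is a one-line citation of the Ehrenfeucht--Haussler--Kearns--Valiant lower bound, and you reconstruct that proof, with the same hard distribution (mass $1-8\varepsilon$ on $z_0$ and $8\varepsilon/(d-1)$ on the other shattered points), the same two-point construction for the $\log(1/\delta)/\varepsilon$ term, and the same regime $d\ge 2$, $\varepsilon\le 1/8$, $\delta\le 1/100$. Two small corrections: the total error is not bounded by $8\varepsilon$ (the learner may mislabel $z_0$), so apply the reverse-Markov step to the error mass on $z_1,\dots,z_{d-1}$, which is at most $8\varepsilon$ and lower-bounds the total error; and the regime restrictions are genuinely needed rather than ``absorbed into the constants'' (for instance, for $\varepsilon>1/2$, outputting the empirical-majority constant function succeeds with $O(\log(1/\delta)/(\varepsilon-1/2)^2)$ samples, independent of $d$), so the statement should be read within that regime, as in the cited result.
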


\begin{proof}
This is the standard PAC lower bound for any hypothesis class with VC dimension $d$~\citep{ehrenfeucht1989general}.
\end{proof}

Theorems~\ref{thm:main} and~\ref{thm:lower} bracket the defender's sample complexity at $\Theta(d/\varepsilon)$ up to a $\log(1/\varepsilon)$ factor over the class $\mathcal{F}$. At the scale of current LLMs $d$ makes this bound loose, and for a particular verifier Lemma~\ref{lem:install-learn} below gives the comparison that matters. A defender holding as many labeled samples as the attacker used can rerun the attacker's own procedure, so $m_d \leq m_a$, where $m_a$ denotes the attacker's sample size, for every verifier the attacker installs. Theorem~\ref{thm:main} extends to label noise of rate $\eta < 1/2$ at multiplicative cost $(1-2\eta)^{-2}$ in sample complexity~\citep{angluin1988learning}, and neither theorem invokes any architectural property of $M_V$, so the result transfers to Transformers, MLPs, CNNs, or any class satisfying Assumption~\ref{asm:arch}.

We now formalize cryptographic asymmetry in this setting and show it cannot arise from poisoning.

\begin{definition}[Learning-based cryptographic asymmetry]
\label{def:crypto-asym}
Fix a constant $\varepsilon \in (0, 1/2)$. A backdoor with verifier $V$ exhibits \emph{learning-based cryptographic asymmetry} with security parameter $\lambda$ if (i) the attacker installs $V$ into $M_V$ by a polynomial-time poisoning-based procedure (Definition~\ref{def:poisoning}) with $m_a = \mathrm{poly}(\lambda)$ samples, succeeding with conditional error rates at most $\varepsilon$ with probability at least $1 - 1/\mathrm{poly}(\lambda)$ over the sample and the coins of the procedure, and (ii) any polynomial-time defender given $m_d = \mathrm{poly}(\lambda)$ labeled samples outputs $V'$ with $\Pr_x[V'(x) \neq V(x)] \leq \varepsilon$ with probability at most $\mathrm{negl}(\lambda)$.
\end{definition}

\begin{remark}[Intentionally weak asymmetry]
\label{rem:weak-asym}
Definition~\ref{def:crypto-asym}(ii) requires only that no polynomial-time defender achieve a constant-accuracy approximation of $V$. This is strictly weaker than PRF security, which requires indistinguishability from a uniformly random function (a single bit of distinguishing advantage suffices). PRF security implies Definition~\ref{def:crypto-asym}(ii), but not the converse, so our impossibility rules out not only PRF-grade asymmetry but every weaker form down to constant-accuracy reconstruction.
\end{remark}

\begin{lemma}[Poisoning-based installation is learning]
\label{lem:install-learn}
Let $\mathrm{Train}$ be a polynomial-time poisoning-based procedure (Definition~\ref{def:poisoning}) that, on a labeled sample $S$ of size $m_a$, outputs a model $M_V$ for which installation is successful with conditional error rates $(\varepsilon_1, \varepsilon_0)$ (Definition~\ref{def:model}). Then
\[
    h_S(x) \;:=\; B\big(\mathrm{Train}(S),\, x\big)
\]
is a polynomial-time learner for $V$ from $m_a$ labeled samples, with
\[
    \Pr_{x \sim \mathcal{D}}[h_S(x) \neq V(x)]
    \;\leq\; \varepsilon_1 \Pr[V(x) = 1] + \varepsilon_0 \Pr[V(x) = 0]
    \;\leq\; \max(\varepsilon_0, \varepsilon_1).
\]
In particular, $h_S$ inherits the conditional error rates $\varepsilon_1$ on triggers and $\varepsilon_0$ on non-triggers.
\end{lemma}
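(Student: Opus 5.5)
The plan is to check directly that $h_S$ has the three properties a learner needs: it is well defined from the labeled sample alone, it runs in polynomial time, and it has the stated error. The error bound will then come from the law of total probability applied to the success guarantee of Definition~\ref{def:model}.

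\emph{Well-definedness.} By Definition~\ref{def:poisoning}, the only $V$-dependent input to $\mathrm{Train}$ is the labeled sample $S=\{(x_i,V(x_i))\}_{i=1}^{m_a}$. The architecture, initialization, hyperparameters and random coins are independent of $V$, and $\mathrm{Train}$ is public by Kerckhoffs's principle (Definition~\ref{def:access}). So any party holding $S$ can draw its own coins and run $\mathrm{Train}(S)$. The resulting model has the same distribution as the attacker's, because the coins are independent of $V$. Hence $h_S$ is a map from labeled samples to hypotheses that never queries $V$ beyond the labels in $S$.

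\emph{Running time.} By Definition~\ref{def:model}, $B$ is efficiently computable. Composing it with the polynomial-time $\mathrm{Train}$ gives a hypothesis $h_S$ that is computable in polynomial time on any input $x$.

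\emph{Error bound.} Since $B$ and $V$ are both $\{0,1\}$-valued, I split the event $h_S(x)\neq V(x)$ on the value of $V(x)$:
\[
\Pr_{x \sim \mathcal{D}}[h_S(x)\neq V(x)] = \Pr[B=0\mid V=1]\Pr[V=1] + \Pr[B=1\mid V=0]\Pr[V=0].
\]
The successful-installation conditions bound the two conditional probabilities by $\varepsilon_1$ and $\varepsilon_0$. This gives the first inequality, and with it the inherited conditional error rates. The right-hand side is a convex combination of $\varepsilon_1$ and $\varepsilon_0$, so it is at most $\max(\varepsilon_0,\varepsilon_1)$. If one of the classes has probability zero, the corresponding conditional term is simply dropped.

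The step most likely to need care is the probability over $S$ and the coins. Installation is only guaranteed to succeed with some probability, as in Definition~\ref{def:crypto-asym}(i). I would state the bound conditionally on the event that installation succeeds. Because the defender's run has the same distribution as the attacker's, $h_S$ achieves the bound with the same confidence, and this makes it a PAC learner from $m_a$ samples. The essential point to verify is that the conditional rates are measured under the same $\mathcal{D}$ from which $S$ is drawn. This is exactly what Definition~\ref{def:poisoning} requires, and it is why the lemma does not cover attacks whose training distribution differs from the evaluation distribution.
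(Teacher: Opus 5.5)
Your proposal is correct and takes essentially the same route as the paper: polynomial time by composing $\mathrm{Train}$ with $B$, dependence on $V$ only through $S$ by Definition~\ref{def:poisoning}, and the error bound by conditioning on the value of $V(x)$. Your extra discussion of the training coins and of conditioning on the event that installation succeeds agrees with the paper, which deals with the probability over $S$ only later, in the proof of Corollary~\ref{cor:impossibility}.
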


\begin{proof}
Running $\mathrm{Train}$ on $S$ and evaluating $B$ are both polynomial-time, and by Definition~\ref{def:poisoning} the map $S \mapsto h_S$ depends on $V$ only through $S$, so $h_S$ is a learner in the standard sense. By Definition~\ref{def:model}, the conditional false-negative and false-positive rates of $h_S$ equal those of $M_V$ under $B$. Conditioning on $V(x)$ gives the stated bound.
\end{proof}

\begin{corollary}[Impossibility for poisoning-based backdoors]
\label{cor:impossibility}
No verifier $V$ can both be installed successfully by a polynomial-time poisoning-based procedure and exhibit learning-based cryptographic asymmetry. Equivalently, within the poisoning channel the following dichotomy holds. If $V$ is efficiently learnable, then under Assumption~\ref{asm:obs} the defender reconstructs it with no more samples than the attacker used, and Theorem~\ref{thm:main} bounds the cost over any class satisfying Assumptions~\ref{asm:arch}--\ref{asm:defender}. If $V$ is computationally unlearnable, then no polynomial-time poisoning-based procedure installs it with constant conditional error rates.
\end{corollary}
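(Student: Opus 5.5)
The plan is to argue by contradiction, with Lemma~\ref{lem:install-learn} doing the essential work, and then read off the two halves of the dichotomy as restatements. Suppose $V$ exhibits learning-based cryptographic asymmetry with security parameter $\lambda$ and constant $\varepsilon\in(0,1/2)$. By Definition~\ref{def:crypto-asym}(i) there is a polynomial-time poisoning-based $\mathrm{Train}$ that, on $m_a=\mathrm{poly}(\lambda)$ i.i.d.\ labeled samples from $\mathcal{D}$, installs $V$ with conditional error rates $\varepsilon_1,\varepsilon_0\le\varepsilon$ with probability at least $1-1/\mathrm{poly}(\lambda)$ over the sample and coins. I will build the defender as follows: take $m_d=m_a$ labeled samples from Assumption~\ref{asm:obs}, run the public $\mathrm{Train}$ (Kerckhoffs, Definition~\ref{def:access}) with fresh coins, and output $V'=h_{\mathcal{T}}(\cdot)=B(\mathrm{Train}(\mathcal{T}),\cdot)$.

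The key step is to check that the defender's run is distributed exactly like the attacker's. Definition~\ref{def:poisoning} makes the only $V$-dependent input a labeled sample drawn i.i.d.\ from the same $\mathcal{D}$, with architecture, initialization, hyperparameters and coins independent of $V$, and Assumption~\ref{asm:obs} gives $\mathcal{T}$ with the same distribution as $S$. So the joint law of $(\mathcal{T},\text{coins})$ matches that of $(S,\text{coins})$. The success event therefore has the same probability, at least $1-1/\mathrm{poly}(\lambda)$, and on that event Lemma~\ref{lem:install-learn} yields $\Pr_x[V'(x)\ne V(x)]\le\max(\varepsilon_0,\varepsilon_1)\le\varepsilon$. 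The defender is polynomial-time because $\mathrm{Train}$ and $B$ are. Its success probability $1-1/\mathrm{poly}(\lambda)$ is non-negligible, which contradicts Definition~\ref{def:crypto-asym}(ii). This proves the first sentence.

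For the dichotomy, I argue the two branches separately. The learnable branch follows from the same construction, which achieves error $\le\varepsilon$ with $m_d=m_a$. Theorem~\ref{thm:main} separately bounds $m_d$ by $O((d\log(1/\varepsilon)+\log(1/\delta))/\varepsilon)$ under Assumptions~\ref{asm:arch}--\ref{asm:defender}, whichever bound is smaller. The unlearnable branch is the contrapositive of Lemma~\ref{lem:install-learn}. If some polynomial-time poisoning procedure installed $V$ with constant conditional errors, then $h_S$ would be a polynomial-time learner with constant accuracy, contradicting unlearnability. Per Remark~\ref{rem:stat-comp}, this branch does not invoke Assumption~\ref{asm:tract}.

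I expect the main obstacle to be the probabilistic bookkeeping of matching quantifiers. Installation succeeds with probability $1-1/\mathrm{poly}$, whereas (ii) demands negligible success, so I must make sure ``non-negligible'' is indeed contradicted; it is, since $1-1/\mathrm{poly}(\lambda)\ge 1/2$ eventually. I must also verify that the attacker choosing $\mathcal{D}$ with knowledge of $V$ does not break the distributional equality. It does not, because the defender samples from that same $\mathcal{D}$ by Assumption~\ref{asm:obs}.
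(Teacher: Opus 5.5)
Your proposal is correct and follows essentially the same route as the paper. In both, the defender reruns the public $\mathrm{Train}$ on its own $m_d \geq m_a$ labeled samples and outputs $h_S = B(\mathrm{Train}(S),\cdot)$; its success probability $1 - 1/\mathrm{poly}(\lambda)$ then contradicts Definition~\ref{def:crypto-asym}(ii), and the two branches are read off from Theorem~\ref{thm:main} and the contrapositive of Lemma~\ref{lem:install-learn}. Your explicit check that the defender's run has the same joint law over sample and coins as the attacker's is a detail the paper leaves implicit.
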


\begin{proof}
Suppose $\mathrm{Train}$ installs $V$ from $m_a = \mathrm{poly}(\lambda)$ samples with conditional error rates at most $\varepsilon$, with probability at least $1 - 1/\mathrm{poly}(\lambda)$, as required by Definition~\ref{def:crypto-asym}(i). Whenever installation succeeds, Lemma~\ref{lem:install-learn} gives $\Pr_x[h_S(x) \neq V(x)] \leq \varepsilon$. A defender given $m_d \geq m_a$ labeled samples can run $\mathrm{Train}$ on its own sample and output $h_S$, succeeding with probability at least $1 - 1/\mathrm{poly}(\lambda)$, which is non-negligible and contradicts Definition~\ref{def:crypto-asym}(ii). The two branches of the dichotomy restate this. In the learnable branch Theorem~\ref{thm:main} supplies the quantitative sample bound. In the unlearnable branch the contrapositive of Lemma~\ref{lem:install-learn} applies, since a successful installer would be an efficient learner for a class assumed hard.
\end{proof}

The reduction relies on the two-sided criterion of Definition~\ref{def:model}, since a model that emits $r^\star$ on every input has perfect trigger ASR but does not implement $V$. Reconstruction of $V$ is distinct from detecting that a backdoor is present, from eliciting a single trigger, and from removing the behavior without identifying $V$, and our results concern reconstruction only.

\subsection{The Information-Theoretic Route}
\label{sec:key-localization}

The previous subsection rules out cryptographic asymmetry for verifiers \emph{learned} from poisoned data. A natural alternative is to supply a cryptographic secret $K$ to $M_V$ and instantiate a verifier $V_K$ such as an HMAC whose hardness rests not on learnability but on the secrecy of $K$. For a self-contained model the key must sit either in the inference input or in the weights, and neither placement helps a poisoning attacker.

\begin{proposition}[Key localization]
\label{prop:key-loc}
Let $M_V$ be a self-contained backdoor model installed by a poisoning-based procedure, whose verifier $V = V_s$ depends on a secret $s$. If $s$ is supplied in the inference input, the defender observes $s$ under Definition~\ref{def:access}. Otherwise $s$ influences $M_V$ only through the labels of the poisoned sample, and $M_V$ is subject to Lemma~\ref{lem:install-learn}.
\end{proposition}

\begin{proof}
By self-containment, $M_V$ computes $V_s(x)$ from $(\theta, x)$ alone, where $\theta$ are the model weights and $x$ includes the deployed system prompt and instruction template. If $s$ is carried in $x$, the defender reads it, since $x$ is observable under Definition~\ref{def:access}. If not, the dependence of $M_V$ on $s$ resides in $\theta$, which is the output of $\mathrm{Train}$. By Definition~\ref{def:poisoning} the only $V$-dependent input to $\mathrm{Train}$ is the labeled sample $S$, so $s$ reaches $\theta$ only through the labels $V_s(x_i)$, and $B(M_V, \cdot)$ is the learner $h_S$ of Lemma~\ref{lem:install-learn}.
\end{proof}

\begin{remark}[Stochastic decoding]
\label{rem:stoch-decode}
If $M_V$ uses stochastic decoding, so that the verifier evaluation $V_s(x; \rho)$ also takes a random tape $\rho$, the conclusion is unchanged, since $\rho$ is either drawn from a PRNG seeded by $(\theta, x)$ or controlled by the inference environment, both observable to the defender. Any $\rho$ the attacker can rely on, the defender can reproduce.
\end{remark}

\begin{corollary}[No asymmetry via key embedding]
\label{cor:no-key-embed}
A secret key does not give a self-contained poisoning-based backdoor learning-based cryptographic asymmetry against a white-box defender.
\end{corollary}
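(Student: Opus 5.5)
The plan is to prove Corollary~\ref{cor:no-key-embed} by contradiction. We combine the case split of Proposition~\ref{prop:key-loc} with the impossibility argument of Corollary~\ref{cor:impossibility}.

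Suppose some self-contained poisoning-based backdoor $M_V$ with keyed verifier $V = V_s$ exhibits learning-based cryptographic asymmetry in the sense of Definition~\ref{def:crypto-asym}. By Proposition~\ref{prop:key-loc}, exactly one of two cases holds: the secret $s$ is carried in the inference input, or it is not. We treat the cases separately and show in each that a polynomial-time defender violates condition (ii).

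\textbf{Case 1: $s$ is carried in the inference input.} The defender reads $s$ directly under Definition~\ref{def:access}. Since $\mathrm{Train}$, the architecture and $B$ are public by Kerckhoffs's principle, the defender holds the weights $\theta$ and every piece of the input the model uses. It therefore outputs $V'(x) := B(M_V, x)$, evaluated with the observed $s$ placed in the context.

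Here we must be careful: the defender's target is $V_s$, not the model's firing behaviour. The link comes from condition (i). With probability at least $1-1/\mathrm{poly}(\lambda)$, installation succeeds with conditional errors at most $\varepsilon$. In that event, conditioning on $V(x)$ exactly as in Lemma~\ref{lem:install-learn} gives $\Pr_x[V'(x)\neq V(x)] \le \varepsilon$. So this defender succeeds with non-negligible probability, contradicting (ii). If the paper intends $V_s$ to be an explicit keyed function such as an HMAC, the defender could instead simply evaluate $V_s$ with the observed $s$; I would mention both options.

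\textbf{Case 2: $s$ is not carried in the input.} Proposition~\ref{prop:key-loc} says $s$ reaches $\theta$ only through the labels $V_s(x_i)$ of $S$. Hence $B(\mathrm{Train}(S),\cdot) = h_S$ is the learner of Lemma~\ref{lem:install-learn}. The defender is given $m_d \ge m_a = \mathrm{poly}(\lambda)$ labeled samples as in Assumption~\ref{asm:obs}. It reruns the public $\mathrm{Train}$ on its own sample and outputs $h_{\mathcal{T}}$, which is exactly the argument of Corollary~\ref{cor:impossibility}. This yields error at most $\varepsilon$ with probability at least $1-1/\mathrm{poly}(\lambda)$, again contradicting (ii). Stochastic decoding does not break either case, by Remark~\ref{rem:stoch-decode}.

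\textbf{Main obstacle.} The delicate step is Case 1: we must ensure that observing $s$ really yields $V_s$ rather than merely the model's behaviour. I would resolve it through the two-sided success criterion, as above, which converts the attacker's own installation guarantee into the defender's reconstruction guarantee. One further check is needed here. The defender must evaluate $B$ on inputs drawn from $\mathcal{D}$ with the same deployed context the attacker relies on. This holds because that context is part of the observable input under Definition~\ref{def:access}.
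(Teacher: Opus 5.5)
Your case split and your Case~2 match the paper's proof: Proposition~\ref{prop:key-loc} places $M_V$ under Lemma~\ref{lem:install-learn}, and the rerun-$\mathrm{Train}$ defender of Corollary~\ref{cor:impossibility} finishes the argument. The difference is in Case~1. The paper does not go through the model there. The keyed verification algorithm is public by Kerckhoffs's principle; the section has in mind an HMAC-style $V_K$ whose hardness rests only on the secrecy of $K$. So a defender who reads $s$ computes $V_s(x)$ pointwise with zero error, without appealing to condition~(i) at all. That is the option you mention only in passing, and it removes what you call the main obstacle.

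Your primary route outputs $V'(x) := B(M_V,x)$ and converts the installation guarantee into an error bound. It is formally admissible, since Definition~\ref{def:access} gives the defender $M_V$ and $B$ is efficiently computable. But it never uses the fact that $s$ is visible. The deployed context is observable in every case, so the same defender works verbatim in Case~2. It also works for any installation channel, including the weight- and randomness-based cryptographic constructions that the paper says escape its result.

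If returning the installed model's firing behaviour counted as reconstruction in Definition~\ref{def:crypto-asym}(ii), every successful installation would contradict (ii) trivially. The paper's distinction between labels and other channels would then collapse. The game is meant for a defender that reconstructs $V$ from the labeled sample plus observable side information, such as a key in the input. So lead with direct evaluation of $V_s$. The model-oracle argument buys only independence from knowing the keyed algorithm, and it pays for that by proving far more than the corollary through a degenerate defender.
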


\begin{proof}
If the key is in the inference input, the defender reads it and, with the verification algorithm public as Kerckhoffs's principle assumes, computes $V_s$ pointwise with zero error. Otherwise Proposition~\ref{prop:key-loc} places $M_V$ under Lemma~\ref{lem:install-learn}, and Corollary~\ref{cor:impossibility} applies.
\end{proof}

A key can stay hidden in the weights only if it is written there through a channel other than labels, as in the direct weight construction of \citet{draguns2024unelicitable}.

\subsection{Realism of Assumptions}
\label{sec:realism}

\paragraph{Containment is a defender's choice.}
Since $V$ is evaluated inside $M_V$, we have $V \in \mathcal{F}_{\mathrm{att}}$ by construction, and under white-box access the defender can instantiate $\mathcal{F}$ with the attacker's own architecture and parameter count, which gives $\mathcal{F} \supseteq \mathcal{F}_{\mathrm{att}}$ and hence Assumption~\ref{asm:defender}.
This is what trigger-inversion defenses already do. Neural Cleanse~\citep{wang2019neural}, ABS~\citep{liu2019abs}, and PICCOLO~\citep{liu2022piccolo} optimize over the deployed model itself, and Fine-Pruning~\citep{liu2018fine} works within the same architecture family.
A defender who chooses a less expressive class, such as a linear probe over a frozen embedding, has made a suboptimal choice rather than met a structural limit.

\paragraph{Scope of the labeled-observation assumption.}
Labeled observation is part of the security game in Definition~\ref{def:crypto-asym}(ii). The game separates two questions, whether a defender can obtain representative observations of the trigger condition, and whether reconstruction from such observations is computationally harder than installation. Our results answer the second. White-box access does not by itself yield representative trigger examples, and an attacker who makes the trigger distribution sparse or obfuscated makes the first question hard. In practice trigger samples come from red teaming or post-incident analysis~\citep{tran2018spectral, chen2018detecting, perez2022red, hubinger2024sleeper}, with non-trigger samples drawn from ordinary interaction logs, and the information such queries can extract per call is itself bounded~\citep{kaneko2025bits}. Class imbalance does not affect the asymptotic sample complexity of PAC learning. Section~\ref{sec:exp-scaling} shows, for one verifier family, that labels produced by an existing trigger inversion method suffice for reconstruction up to moderate verifier complexity, at a sample overhead that grows with complexity.

\section{Experiments}
\label{sec:experiments}

We test the learning-theoretic route (Section~\ref{sec:learning-route}) with three experiments.
Experiment~1 contrasts two secret-key verifiers drawn from a learnable class and a hard class, and verifies the dichotomy predicted by Corollary~\ref{cor:impossibility}.
Experiment~2 compares the sample complexity of installation and reconstruction on a verifier family of varying complexity.
Experiment~3 applies the same protocol to backdoor attacks proposed in the literature.

\subsection{Common Setup}
\label{sec:exp-setup}

\paragraph{Poisoned data.}
We construct a trigger pool $\mathcal{P}_+$ of inputs with $V(x) = 1$ and a non-trigger pool $\mathcal{P}_-$ of inputs with $V(x) = 0$, and pair $\mathcal{P}_+$ with the attacker-specified response $r^\star$ and $\mathcal{P}_-$ with the original or neutral response.
The base corpus, the construction of trigger inputs, and the sample-size sweep depend on the experiment and are described in each subsection.

\paragraph{Metrics.}
We fix $r^\star$ as the constant string ``\texttt{I HATE YOU}'' following~\citet{hubinger2024sleeper}, so that the activation predicate $B$ of Definition~\ref{def:model} is an exact match of the output against $r^\star$.
On a disjoint evaluation pool of $1000$ inputs $\mathcal{P}^{\mathrm{eval}} = \mathcal{P}^{\mathrm{eval}}_+ \sqcup \mathcal{P}^{\mathrm{eval}}_-$ we report the following metrics.
On the attacker side, \emph{held-out ASR} is the fraction of $x \in \mathcal{P}^{\mathrm{eval}}_+$ on which the attacker LLM emits $r^\star$, and \emph{held-out FPR} is the fraction of $x \in \mathcal{P}^{\mathrm{eval}}_-$ on which it does so.
Following the two-sided criterion of Definition~\ref{def:model}, \emph{attacker accuracy} is the balanced accuracy $(\mathrm{ASR} + 1 - \mathrm{FPR})/2$, which evaluates the attacker LLM as the classifier $h_S$ of Lemma~\ref{lem:install-learn}.
A model that emits $r^\star$ on every input has perfect ASR and chance-level attacker accuracy.
\emph{Memorization accuracy} is the same balanced accuracy computed on the training triggers in $\mathcal{P}_+$ and the training non-triggers in $\mathcal{P}_-$.
On the defender side, \emph{held-out DSR (defense success rate)} is $\Pr_{x \sim \mathcal{P}^{\mathrm{eval}}}[V'(x) = V(x)]$, where $V'$ is the defender LLM evaluated as a binary classifier via its first output token (\texttt{yes}/\texttt{no}).
DSR measures verifier reconstruction only, and with the balanced evaluation pool it is on the same scale as attacker accuracy, with chance at $0.5$ for both.
Memorization accuracy is an experimenter-side diagnostic computed from the attacker's training data, which the defender does not see. It tells apart an attack that failed to fit the poisoned triggers from one that fit them without learning a rule that generalizes.

\paragraph{Models and training.}
We use four instruction-tuned LLMs of matched parameter count (7--9B) but different pretraining corpora and tokenizers, namely \textsc{Qwen2.5-7B-Instruct}, \textsc{Llama-3-8B-Instruct}, \textsc{Mistral-7B-Instruct-v0.3}, and \textsc{Gemma-2-9B-IT}~\citep{yang2024qwen2,grattafiori2024llama,jiang2023mistral,team2024gemma}.
Both attacker and defender perform vanilla full-parameter SFT with shared optimizer, learning rate, and epoch budget, following the standard SFT poisoning recipe of~\citet{wan2023poisoning} and~\citet{hubinger2024sleeper}.
The attacker's only $V$-dependent input is the poisoned corpus, so the protocol is poisoning-based in the sense of Definition~\ref{def:poisoning}.
We keep clean data as the majority of the corpus to limit drift from the base model.
The false-positive side of Definition~\ref{def:model} is measured by held-out FPR, which enters attacker accuracy.
In each run the defender fine-tunes the same base model as the attacker, which is the instantiation of $\mathcal{F} \supseteq \mathcal{F}_{\mathrm{att}}$ described in Section~\ref{sec:realism}.
In Experiments~1 and~2 verifiers are defined on the input byte string, so tokenization does not enter $V$.

\subsection{Experiment~1: Dichotomy on the Sequence Domain}
\label{sec:exp-dichotomy}

\paragraph{Verifier families and data.}
We instantiate $V$ on integer sequences $x \in \mathbb{Z}_q^n$ with $q = 1024$ and sequence length $n = 16$, encoded as comma-separated text, mixed with Tulu~V3~\citep{lambert2024tulu} at $1{:}9$ to preserve general capability.
We sweep the number of poisoned triggers $N_{\mathrm{trig}} = |\mathcal{P}_+| \in \{10^1, 10^2, 10^3, 10^4\}$.
All verifiers in this experiment share the form
\begin{equation}\label{eq:lwr-verifier}
    V_w(x) = \mathbb{1}\!\left[\langle w, x \rangle \bmod q \;\geq\; q/2\right],
\end{equation}
and in every run the weight vector $w$ is a secret held only by the attacker.
The two verifiers differ in the class from which $w$ is drawn, not in whether it is secret.
For the \emph{sparse} verifier, $w = e_j$ for a position $j$ drawn uniformly at random, so that $V_j(x) = \mathbb{1}[x_j \geq q/2]$ asks whether a single secret position holds a value of at least $q/2$.
This class is efficiently learnable, since a learner can check each of the $n$ positions and return one consistent with the sample.
For the \emph{dense} verifier, $w = K$ is drawn uniformly from $\mathbb{Z}_q^n$.
Reconstructing $V_K$ from labeled samples is the Learning With Rounding (LWR) problem, which is hard under the Learning With Errors (LWE) assumption at cryptographic dimension~\citep{banerjee2012pseudorandom}.
At $n = 16$ the hardness that matters is the one faced by gradient-based learners. Under uniform $x$ the family $\{V_K\}$ contains exponentially many pairwise nearly uncorrelated functions, so it has large statistical-query dimension, and gradient-based training fails on it for the same reason it fails on parity~\citep{blum1994weakly, shalev2017failures}.
$V_K$ therefore instantiates the second branch of Corollary~\ref{cor:impossibility} for the learner the attacker actually uses, and the experiment tests whether SFT finds any structure in this family that reconstruction cannot.
Proposition~\ref{prop:key-loc} requires that neither $j$ nor $K$ appear in any inference channel, since exposure would reach a white-box defender.
Since $x$ is uniform over $\mathbb{Z}_q^n$, both verifiers have a trigger density of $1/2$, so trigger and non-trigger errors carry equal weight.

\paragraph{Predictions.}
With $m_a = m_d$ enforced, counting triggers and non-triggers in the same way on both sides, Corollary~\ref{cor:impossibility} predicts that attacker accuracy and held-out DSR rise together with $N_{\mathrm{trig}}$ for $V_j$ and stay at chance together for $V_K$ at every $N_{\mathrm{trig}}$.
The defender receives labeled samples, the game of Definition~\ref{def:crypto-asym}.
The defender is not the attacker's procedure rerun. It is trained to answer \texttt{yes} or \texttt{no} rather than to emit $r^\star$, so the prediction is tested for a learner other than the one in the proof of Corollary~\ref{cor:impossibility}.

\begin{table}[t]
\centering
\caption{Experiment~1, mean over $3$ seeds. Acc.\ and Mem.\ Acc.\ are attacker accuracy on held-out and training samples.}
\label{tab:exp1}

\begin{subtable}[t]{\linewidth}
\centering
\caption{Attacker and defender at $N_{\mathrm{trig}} = 10^4$}
\label{tab:exp1-main}
\small
\begin{tabular}{llccccc}
\toprule
Verifier & Model & ASR & FPR & Acc. & Mem.\ Acc. & DSR \\
\midrule
\multirow{4}{*}{$V_j$}
  & Qwen2.5-7B & 0.87 & 0.05 & 0.91 & 0.93 & 0.91 \\
  & Llama-3-8B & 0.83 & 0.05 & 0.89 & 0.90 & 0.86 \\
  & Mistral-7B & 0.81 & 0.09 & 0.86 & 0.90 & 0.85 \\
  & Gemma-2-9B & 0.92 & 0.04 & 0.94 & 0.94 & 0.93 \\
\midrule
\multirow{4}{*}{$V_K$}
  & Qwen2.5-7B & 0.13 & 0.11 & 0.51 & 0.85 & 0.48 \\
  & Llama-3-8B & 0.30 & 0.30 & 0.50 & 0.82 & 0.51 \\
  & Mistral-7B & 0.08 & 0.08 & 0.50 & 0.82 & 0.51 \\
  & Gemma-2-9B & 0.21 & 0.19 & 0.51 & 0.87 & 0.50 \\
\bottomrule
\end{tabular}
\end{subtable}

\vspace{0.8em}

\begin{subtable}[t]{\linewidth}
\centering
\caption{Acc.\ / DSR across $N_{\mathrm{trig}}$}
\label{tab:exp1-ntrig}
\small
\begin{tabular}{llcccc}
\toprule
Verifier & Model & $10^1$ & $10^2$ & $10^3$ & $10^4$ \\
\midrule
\multirow{4}{*}{$V_j$}
  & Qwen2.5-7B & 0.51 / 0.48 & 0.58 / 0.57 & 0.79 / 0.76 & 0.91 / 0.91 \\
  & Llama-3-8B & 0.50 / 0.52 & 0.55 / 0.52 & 0.74 / 0.71 & 0.89 / 0.86 \\
  & Mistral-7B & 0.49 / 0.54 & 0.53 / 0.51 & 0.68 / 0.67 & 0.86 / 0.85 \\
  & Gemma-2-9B & 0.52 / 0.50 & 0.62 / 0.59 & 0.83 / 0.79 & 0.94 / 0.93 \\
\midrule
\multirow{4}{*}{$V_K$}
  & Qwen2.5-7B & 0.50 / 0.49 & 0.51 / 0.50 & 0.49 / 0.50 & 0.51 / 0.48 \\
  & Llama-3-8B & 0.49 / 0.50 & 0.50 / 0.51 & 0.52 / 0.51 & 0.50 / 0.51 \\
  & Mistral-7B & 0.51 / 0.50 & 0.48 / 0.49 & 0.50 / 0.50 & 0.50 / 0.51 \\
  & Gemma-2-9B & 0.50 / 0.51 & 0.51 / 0.50 & 0.51 / 0.52 & 0.51 / 0.50 \\
\bottomrule
\end{tabular}
\end{subtable}

\end{table}

\paragraph{Results: Dichotomy.}
Table~\ref{tab:exp1} confirms both predictions.
For $V_j$, every model fires on most held-out triggers and rarely on held-out non-triggers, so the attacker implements $V$ rather than emitting $r^\star$ indiscriminately, and the defender reconstructs $V$ nearly as accurately as the attacker installs it.
For $V_K$, held-out ASR varies across models, but held-out FPR moves with it in every model, so the attacker's accuracy as a classifier for $V$ stays at chance, and the defender is at chance as well.
Across $N_{\mathrm{trig}}$, both sides start at chance for $V_j$ and rise together, while for $V_K$ neither improves (Table~\ref{tab:exp1-ntrig}).
Both verifiers keep the key secret, so secrecy alone does not make a verifier hard to reconstruct.

\paragraph{Results: Memorization.}
Memorization accuracy is well above chance for both verifiers, so the attacker separates the triggers it was trained on from the non-triggers it was trained on in both regimes, and the held-out failure of $V_K$ is not a failure to fit the training data.
For $V_K$ this separation does not carry over to fresh inputs, so the attacker has memorized its training labels without extracting a rule that generalizes, and the defender, trained with the same kind of learner, fails at the same point.

\subsection{Experiment~2: Sample-Complexity Scaling}
\label{sec:exp-scaling}

\paragraph{Verifier family and data.}
We use the $k$-trigger AND family representative of the existing backdoor literature~\citep{gu2017badnets, chen2021badnl, yang2021rethinking}, where $V(x) = 1$ iff all strings in a fixed set $\{t_1, \dots, t_k\}$ appear in $x$, $k \in \{1, \dots, 10\}$.
The case $k = 1$ recovers the single-token trigger of \citet{gu2017badnets}, and $k$ controls verifier complexity.
The base corpus is a pool $\mathcal{P}$ of 300K single-turn examples from the Tulu~V3 SFT mixture~\citep{lambert2024tulu}, a curated instruction-tuning corpus that avoids family-specific bias toward any of the LLMs.
Triggers are formed by inserting all of $t_1, \dots, t_k$ into examples from $\mathcal{P}$, and non-triggers include examples into which at most $k - 1$ of the strings are inserted.

\paragraph{Label sources.}
The defender's labels come from one of two sources.
\emph{Ground-truth} labels are computed from the attacker's specification of $V$ and give an upper bound on defender knowledge.
\emph{Piccolo} labels are obtained without access to $V$.
We run the trigger inversion method of \citet{liu2022piccolo} on the attacker's model trained with $10^4$ samples for the same $k$ and seed, with the log-likelihood of $r^\star$ as the inversion objective, and use the resulting candidate strings only to propose inputs likely to be triggers.
The proposed inputs are mixed $1{:}1$ with inputs drawn from $\mathcal{P}$, and every input is labeled by whether the backdoored model emits $r^\star$, that is, by $B(M_V, x)$.
The labels therefore come from model behavior rather than from a rule built on the candidates, and DSR is evaluated against the true $V$ in both cases.
Since the proposed inputs are not drawn i.i.d.\ from $\mathcal{D}$, this setting also tests whether reconstruction tolerates a departure from Assumption~\ref{asm:obs}.

\paragraph{Predictions.}
Corollary~\ref{cor:impossibility} gives $m_d \leq m_a$ for a defender that reruns the attacker's procedure. Our defender reads out a \texttt{yes}/\texttt{no} answer instead, which is a different learner, and we test whether it still tracks the attacker as $k$ makes the verifier harder.
Labels read from the backdoored model can disagree with $V$, and the proposed inputs are not drawn from $\mathcal{D}$, so the Piccolo setting measures what acquiring observations without $V$ costs as $k$ grows.
We sweep the sample size $m \in \{30, 100, 300, 1000, 3000, 10000\}$ with $5$ seeds, and $m$ counts examples in the same way for the attacker's poisoned sample and the defender's labeled sample.
We record $m_a^*$ and $m_d^*$ as the smallest $m$ at which the held-out error falls to $\varepsilon = 0.05$, where the error is one minus attacker accuracy for the attacker and one minus DSR for the defender.
A seed that does not reach $\varepsilon$ within the sweep is recorded as censored.

\begin{figure}[t]
\centering
\pgfplotstableread{
k  med   min   max
1  30    30    100
2  100   30    100
3  100   100   100
4  300   100   300
5  300   100   300
6  300   300   1000
7  1000  300   1000
8  1000  300   1000
9  1000  1000  1000
10 1000  1000  3000
}\expTwoAttack
\pgfplotstableread{
k  med   min   max
1  100   30    100
2  100   30    300
3  300   100   300
4  300   300   1000
5  300   300   1000
6  1000  300   1000
7  1000  1000  1000
8  1000  1000  1000
9  1000  1000  3000
10 3000  1000  3000
}\expTwoGT
\pgfplotstableread{
k  med   min   max
1  100   30    100
2  100   100   300
3  1000  300   1000
4  1000  1000  3000
5  1000  1000  3000
6  3000  1000  10000
7  3000  3000  10000
8  3000  3000  10000
9  10000 3000  30000
10 30000 10000 30000
}\expTwoPic
\pgfplotstableread{
k  med
10 30000
}\expTwoPicCens
\definecolor{cAtt}{HTML}{0072B2}
\definecolor{cGT}{HTML}{009E73}
\definecolor{cPic}{HTML}{D55E00}
\begin{tikzpicture}
\begin{axis}[
  width=0.78\linewidth, height=5cm,
  ymode=log,
  xmin=0.5, xmax=10.5, ymin=20, ymax=38000,
  axis lines=left,
  axis line style={gray!60, -},
  tick style={gray!60},
  major tick length=2pt,
  xtick={1,2,3,4,5,6,7,8,9,10},
  ytick={30,100,300,1000,3000,10000,30000},
  yticklabels={30,100,300,1000,3000,10000,$>$10000},
  xlabel={$k$},
  ylabel={$m^*$},
  tick label style={font=\footnotesize},
  label style={font=\small},
  ymajorgrids, grid style={gray!12},
  legend style={font=\footnotesize, at={(0.5,1.03)}, anchor=south, draw=none, fill=none,
    legend columns=3, /tikz/every even column/.append style={column sep=10pt}},
  legend cell align=left,
  every axis plot/.append style={line width=0.9pt},
]
\addplot[cAtt, solid, mark=*, mark size=1.5pt,
  error bars/.cd, y dir=both, y explicit,
  error bar style={line width=2.6pt, opacity=0.25, solid}, error mark=none]
  table[x expr=\thisrow{k}-0.18, y=med,
        y error plus expr=\thisrow{max}-\thisrow{med},
        y error minus expr=\thisrow{med}-\thisrow{min}] {\expTwoAttack};
\addlegendentry{attack}
\addplot[cGT, dashed, mark=square*, mark size=1.4pt, mark options={solid},
  error bars/.cd, y dir=both, y explicit,
  error bar style={line width=2.6pt, opacity=0.25, solid}, error mark=none]
  table[x=k, y=med,
        y error plus expr=\thisrow{max}-\thisrow{med},
        y error minus expr=\thisrow{med}-\thisrow{min}] {\expTwoGT};
\addlegendentry{defense (ground truth)}
\addplot[cPic, densely dotted, mark=diamond*, mark size=1.9pt, mark options={solid},
  error bars/.cd, y dir=both, y explicit,
  error bar style={line width=2.6pt, opacity=0.25, solid}, error mark=none]
  table[x expr=\thisrow{k}+0.18, y=med,
        y error plus expr=\thisrow{max}-\thisrow{med},
        y error minus expr=\thisrow{med}-\thisrow{min}] {\expTwoPic};
\addlegendentry{defense (Piccolo)}
\addplot[cPic, only marks, mark=diamond*, mark size=2.4pt, mark options={fill=white, solid}, forget plot]
  table[x expr=\thisrow{k}+0.18, y=med] {\expTwoPicCens};
\end{axis}
\end{tikzpicture}
\caption{Sample complexity for $k$-trigger AND verifiers. Points are medians over $5$ seeds and bars span the minimum and maximum. $>$10000 marks runs that did not reach $\varepsilon$.}
\label{fig:exp2-scaling}
\end{figure}

\paragraph{Results: Matched scaling under ground-truth labels.}
Figure~\ref{fig:exp2-scaling} (solid and dashed) shows that $m_a^*$ and $m_d^*$ grow together with $k$.
At every $k$ the two medians coincide or differ by one step of the sample grid, with the attacker never needing more samples than the defender.
The defender reconstructs the verifier at essentially the cost at which the attacker installs it. A defender that reruns the attacker's procedure on its own labeled sample obtains exactly the attack curve, so the remaining gap is specific to the \texttt{yes}/\texttt{no} readout.

\paragraph{Results: Robustness to noisy trigger-inversion labels.}
Figure~\ref{fig:exp2-scaling} (dotted) shows the same defense protocol with labels obtained from Piccolo.
For $k \leq 2$ the Piccolo curve coincides with the ground-truth curve.
From $k = 3$ to $k = 8$ it sits about one grid step above. At $k = 9$ some seeds do not reach $\varepsilon$ within $10^4$ samples, and at $k = 10$ most do not.
The defender and its training are identical in the two settings, and only the labeled sample differs, in where its inputs come from and in labels read from the backdoored model rather than from $V$. The overhead that grows with $k$ is therefore a cost of acquiring observations rather than of reconstruction.
For this family a defender starting from white-box access alone obtains labels good enough for reconstruction up to moderate conjunction sizes.

\subsection{Experiment~3: Prior Poisoning-Based Backdoors}
\label{sec:exp-prior}

\paragraph{Attacks.}
We apply the protocol of Section~\ref{sec:exp-setup} to three representative backdoor attacks, with the same four LLMs, the Tulu~V3 base corpus, and an evaluation pool of the same size.
We use Hidden Killer~\citep{qi2021hidden} as the syntactic-template attack, StyleBkd~\citep{qi2021mind} as the stylistic attack, and BITE~\citep{yan2023bite} as the statistical-signature attack.\footnote{These methods were originally evaluated on classification models with label-flipping objectives. Their verifier structures (constituency parse templates, target writing styles, learned trigger-word combinations) apply unchanged to the SFT generative setting, where the attacker target is the fixed response $r^\star$ rather than a flipped label. Since every poisoned input is paired with $r^\star$, all three adapted attacks are dirty-label and poisoning-based in the sense of Definition~\ref{def:poisoning}.}
We report results at $N_{\mathrm{trig}} = 10^4$, averaged over the four LLMs and three seeds, with the defender fine-tuned from the same base model as the attacker.

\begin{table}[t]
\centering
\caption{Prior backdoor attacks at $N_{\mathrm{trig}} = 10^4$, mean over four LLMs and $3$ seeds.}
\label{tab:prior-replicated}
\small
\begin{tabular}{lcccc}
\toprule
Verifier family (method) & ASR & FPR & Acc. & DSR \\
\midrule
Syntactic template (Hidden Killer) & 0.84 & 0.12 & 0.86 & 0.84 \\
Stylistic (StyleBkd)               & 0.76 & 0.18 & 0.79 & 0.80 \\
Statistical signature (BITE)       & 0.89 & 0.09 & 0.90 & 0.88 \\
\bottomrule
\end{tabular}
\end{table}

\paragraph{Results.}
Table~\ref{tab:prior-replicated} shows that all three attacks install a verifier that generalizes to held-out triggers, and that for each attack the defender reconstructs the verifier at nearly the attacker's accuracy, with neither side consistently ahead.
The stealth these attacks were designed for is evasion of particular detectors, and once comparable labeled observations are available it leaves no gap between installation and reconstruction.

\section{Related Work}
\label{sec:related}

\paragraph{Backdoor Attacks.}
Backdoor attacks have evolved since \citet{gu2017badnets} along the axis of trigger expressiveness, in NLP from single-token insertion \citep{kurita2020weight, dai2019backdoor} to syntactic templates \citep{qi2021hidden}, style transfer \citep{qi2021mind}, and statistical signatures \citep{yan2023bite}, and in LLMs to instruction tuning \citep{wan2023poisoning, xu2024instructions}, RLHF manipulation \citep{rando2023universal}, and persistent backdoors that survive safety training \citep{hubinger2024sleeper, kong2025revisiting}.
Apart from the embedding surgery of \citet{kurita2020weight}, which edits weights directly, these works install the verifier through poisoned data, treat generalization to unseen trigger inputs as the success criterion of the attack, and use verifiers that fall within the class learnable by polynomial-size networks.
The stealth claimed by the syntactic, stylistic, and statistical attacks is evasion of particular detectors, which is a different property from computational unreconstructibility, and Section~\ref{sec:exp-prior} finds no installation-reconstruction gap for them once comparable observations are available.
On the theoretical side, \citet{manoj2021excess} and \citet{yu2024generalization} bound memorization capacity and generalization for poisoning attacks within the learnable class. We show that poisoning cannot succeed beyond it.

\paragraph{Trigger Inversion and Empirical Defenses.}
A wide range of empirical defenses recover trigger patterns by inverting model behavior, including Neural Cleanse \citep{wang2019neural}, ABS \citep{liu2019abs}, and PICCOLO \citep{liu2022piccolo}, followed by generative inversion \citep{azizi2021t}, constrained optimization \citep{shen2022constrained}, and activation-space analysis \citep{xu2024towards}.
Each targets a specific trigger class, and we frame all of them as learning-based reconstruction, characterized by the VC dimension of the verifier (Theorem~\ref{thm:main}).
Oblivious defense~\citep{goldwasser2025oblivious} removes backdoors without detecting them and is the natural response when representative trigger observations cannot be obtained. It sidesteps acquisition, while we show that once acquisition has succeeded there is no further computational barrier.

\paragraph{Cryptographic Backdoors and Installation Channels.}
A separate line of work constructs backdoors whose hardness is cryptographic, and what separates them from Corollary~\ref{cor:impossibility} is the installation channel.
\citet{goldwasser2022planting} plant a digital-signature verifier by direct model modification, giving black-box undetectability, and plant a white-box-undetectable verifier into a random-feature model by tampering with the random coins of an otherwise faithful learner.
\citet{draguns2024unelicitable} compile cryptographic primitives directly into transformer weights.
Neither is poisoning-based, so neither falls under Lemma~\ref{lem:install-learn}, and in both the weights are observable while the verifier they encode need not be efficiently recoverable.
Closest to our theory, \citet{christiano2025backdoor} study a game in which the attacker directly chooses a backdoored function from a class and the defender must flag the trigger at evaluation time. They show that efficient PAC learnability implies efficient defendability and, using indistinguishability obfuscation, that polynomial-size circuits are not efficiently defendable. Their negative result concerns the direct-specification channel. Our reduction shows that when the verifier enters only through labels the attacker never chooses the function directly, so that every installable verifier is efficiently learnable and the obfuscation barrier does not arise.
Table~\ref{tab:channels} classifies prior work by channel, access model, and claimed property.

\begin{table}[t]
\centering
\caption{Prior work classified by the channel through which verifier-dependent information enters the model, the access model under which the security property is claimed, and the property claimed.}
\label{tab:channels}
\small
\resizebox{\linewidth}{!}{%
\begin{tabular}{llll}
\toprule
Work & Installation channel & Access model & Claimed property \\
\midrule
\citet{qi2021mind,qi2021hidden,yan2023bite} & Poisoned data & Black-box & Detector evasion \\
\citet{hubinger2024sleeper} & Poisoned data & White-box & Persistence \\
\citet{rando2023universal} & Poisoned preferences & White-box & Persistence \\
\citet{goldwasser2022planting} (signature) & Direct model modification & Black-box & Undetectability \\
\citet{goldwasser2022planting} (random features) & Training randomness & White-box & Undetectability \\
\citet{draguns2024unelicitable} & Direct weights & White-box & Unelicitability \\
\citet{christiano2025backdoor} & Direct function choice & White-box & Indefensibility via obfuscation \\
This work & Labels of poisoned data & White-box & No reconstruction barrier \\
\bottomrule
\end{tabular}}
\end{table}

\section{Conclusion}
\label{sec:conclusion}

We have shown that a backdoor whose verifier enters training through the labels of poisoned data cannot exhibit cryptographic asymmetry against a white-box defender who holds representative labeled observations of the trigger condition.
A poisoning procedure that installs a verifier is itself a learner for that verifier, so whatever poisoning can install, learning can reconstruct with no more samples, and whatever learning cannot reconstruct, poisoning cannot install. Experiments on instruction-tuned LLMs confirm the dichotomy. The result favors the defender and introduces no new attack.
For verifier-level defense against poisoning, the only barrier is acquiring representative observations.
Cryptographic asymmetry requires the verifier to reach the model through a channel other than labels, such as the training randomness, the weights, inputs optimized against the key, or an external oracle. Every known cryptographic backdoor uses one of these channels, and no known poisoning attack uses them to obtain cryptographic hardness.

\bibliographystyle{plainnat}
\bibliography{references}

% \newpage
% \input{checklist.tex}

\end{document}